\documentclass[conference]{IEEEtran}

\usepackage{newtxtext}
\usepackage{amsmath,amsfonts,amsthm}

\usepackage{amssymb}
\usepackage{newtxmath}

\usepackage{cite}
\usepackage{amsmath,amsfonts}
\usepackage{graphicx}
\usepackage{textcomp}
\usepackage{xcolor}
\usepackage[T1]{fontenc}
\usepackage[utf8]{inputenc}
\usepackage{bm}
\usepackage{comment}
\usepackage{mathtools}
\usepackage{booktabs}
\usepackage{multirow}
\usepackage{array}
\usepackage{enumitem}
\PassOptionsToPackage{hyphens}{url}
\usepackage{xurl}
\usepackage[hidelinks]{hyperref}

\hypersetup{pdfauthor={},pdftitle={}}

\newtheorem{definition}{Definition}
\newtheorem{theorem}{Theorem}
\newtheorem{proposition}{Proposition}

\newtheorem{remark}{Remark}

\newcommand{\TODO}[1]{\textcolor{red}{#1}}
\newcommand{\cpsaint}{\textsc{CPSAINT}}
\newcommand{\friesak}{\textsc{FRIESA-K}}
\newcommand{\risk}{\mathcal{R}}
\newcommand{\states}{\mathcal{S}}
\newcommand{\paths}{\Pi}
\newcommand{\controls}{\mathcal{C}}

\newcommand{\layers}{\mathcal{L}}

\newcommand{\Fcat}{\mathcal{F}}
\newcommand{\Wwarn}{\mathcal{W}}
\newcommand{\Rgov}{R_{\mathrm{gov}}}
\newcommand{\Ri}{R_i}
\newcommand{\Rres}{R_{\mathrm{res}}}
\def\BibTeX{{\rm B\kern-.05em{\sc i\kern-.025em b}\kern-.08em
    T\kern-.1667em\lower.7ex\hbox{E}\kern-.125emX}}
\begin{document}

\title{From Agent Failure Paths to Quantified Residual Risk:
A Compositional Framework for Resilient Agentic AI}

\author{
\IEEEauthorblockN{
  Hassan Karim\IEEEauthorrefmark{1},
  Sai Sitharaman\IEEEauthorrefmark{2},
  Deepti Gupta\IEEEauthorrefmark{3},
  Danda B. Rawat\IEEEauthorrefmark{4}
}
\IEEEauthorblockA{
  \IEEEauthorrefmark{1}Stable Cyber, Dallas, TX, USA; ORCID: 0000-0002-5441-049X \\
  \IEEEauthorrefmark{2}Zetafence, Dublin, CA, USA \\
  \IEEEauthorrefmark{3}Texas A\&M University--Central Texas, Killeen, TX, USA \\
  \IEEEauthorrefmark{4}Howard University, Washington, DC, USA; ORCID: 0000-0003-3638-3464
}
}

\maketitle
\begin{abstract}
Agentic AI is crossing trust boundaries faster than current risk models can represent. 
Existing approaches provide one of two partial views. They either describe failure mechanisms without producing a transferable residual-risk estimate, or they produce a risk estimate while treating the internal failure path as a black box. 
We couple those two views by proposing \cpsaint, a seven-layer integrity decomposition over Physical state, Sensors, Data, Compute, Actuators, Environment, and Time, paired with \friesak, a residual-risk functional that maps each failure path to a quantified risk instance.
\friesak\ grounds the resistance term $K$ in a controlled absorbing Markov model so that control effectiveness is derived from state dynamics rather than assigned as an informal score.
The result is a concise mechanism-to-magnitude pipeline for resilient agentic and embodied AI. 
We report governance observability through a separate additive penalty instead of inserting governance as a new variable in the resistance functional. We formalize structural composability linking valid failure paths to well-defined risk instances and show the framework on two contrasting scenarios: a hard real-time warehouse robot and a governance-instrumented financial-services agent. Across both cases, the same layer grammar, variable semantics, and dynamic-resistance construction remain intact. Thus, we obtain a compact kernel that supports cross-domain reasoning, explicit assumptions, and quantitatively grounded formalism of composable trust.
\end{abstract}

\begin{IEEEkeywords}
agentic AI, compositional resilience, quantitative risk, CPS, integrity, Markov models
\end{IEEEkeywords}

\section{Introduction}
Agentic AI, software that autonomously pursues multi-step goals through planning, tool use, and environmental interaction, is moving rapidly from prototype to production. 
Gartner projects that by 2028, at least 15\% of
  day-to-day work decisions will be made autonomously by such systems, up from
  essentially zero in 2024, and that 33\% of enterprise applications will embed
  agentic capabilities over the same period~\cite{Gartner2025AgenticCancel}.
  
Unlike chat-bots that merely respond to a single prompt, an agentic system decomposes objectives, sequences actions across multiple decision cycles,
and modifies external state (records, workflows, physical actuators) with little or no human approval per step.

Agentic AI is already operating across trust boundaries that static risk scores do not capture~\cite{dehghantanha2026sokatksurfagentic}.
A planning agent, for example, may call tools, update records, trigger workflows, or influence actuation. A robotic controller may fuse sensor data, invoke learned models, and issue time-critical commands.
In both settings, the failure is path-dependent. Integrity does not fail once. It fails across layers, interfaces, and deadlines.
\begin{figure}[t]
\centering
\includegraphics[width=0.99\columnwidth]{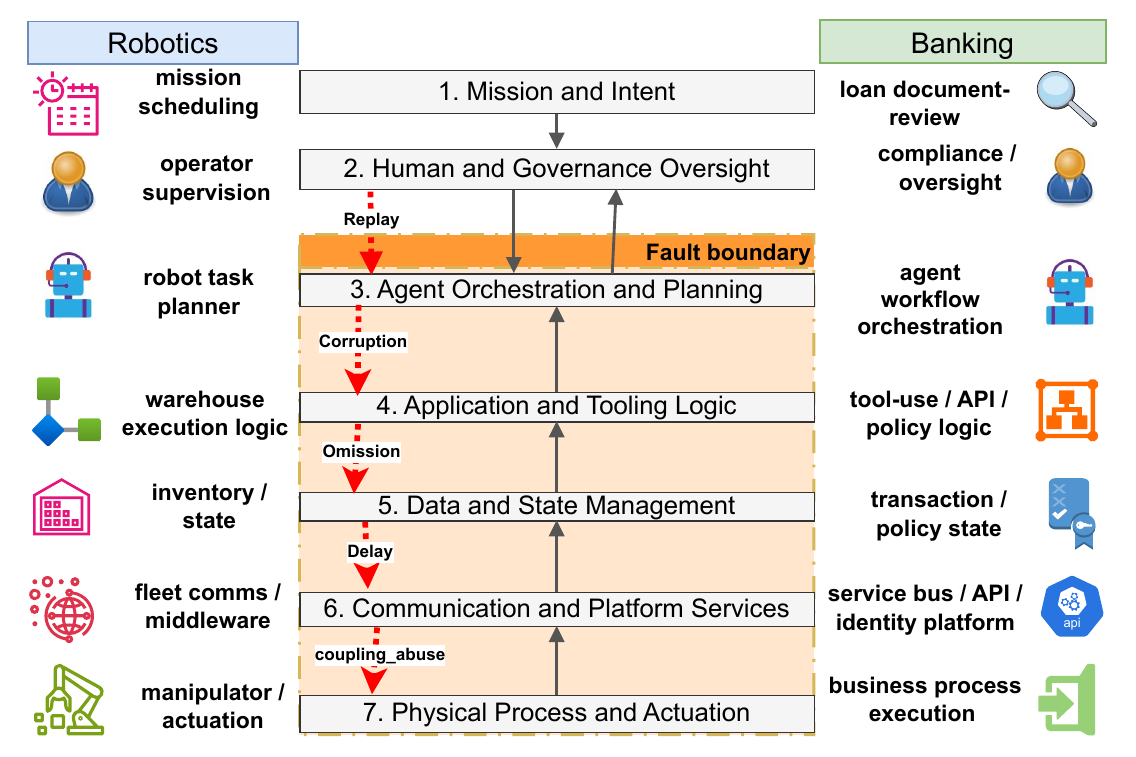}%
\caption{Comparative \cpsaint\ seven-layer grammar for structural decomposition across warehouse robotics and banking agentic workflows. Initiating compromise begins near orchestration, data, or tooling logic, crosses explicit fault boundaries, and propagates toward either physical actuation or governance-sensitive workflow degradation. Layer numbering corresponds to the formal symbol set 
given in Section~\ref{sec:method}.}
\label{fig:cpsaint}
\end{figure}

Current methods split the problem. Structural methods such as STAMP~\cite{Leveson2004STAMP} and STPA~\cite{Friedberg2017STPASafeSec} and related safety-security analyses provide causal pathways, unsafe control scenarios, and constraint violations, but they stop before a transferable residual-risk estimate. Quantitative approaches such as adversarial risk analysis, Conditional Value at Risk (CVaR) reporting~\cite{RockafellarUryasev2000CVaR}, and rare-event simulation provide estimators and tail summaries, but they usually treat the system between threat and loss as a black box~\cite{RiosInsua2021ARAcyber}.
Recent agent-security and runtime-assurance methods strengthen privilege boundaries, constrain tool use, or verify local behaviors, yet they do not provide a compact, domain-transferable mapping from failure path to residual risk.  

No existing framework simultaneously answers both questions:
  \textit{through what mechanism does integrity fail?} and
  \textit{how much residual risk results?}
The closest work either provides structural causality without a transferable risk estimate, or a risk estimate without structural grounding. These ensure that neither is suitable for the cross-domain, path-dependent failure modes that agentic and cyber-physical systems exhibit.
  
In this paper, we address this gap by coupling mechanism and magnitude. We use \cpsaint\ (Fig.~\ref{fig:cpsaint}) to answer a structural question: through what mechanism does integrity fail?
We use \friesak\ to answer the question of how much residual risk results, quantitatively? We then derive the resistance term $K(\pi,\tau,u)$ from controlled absorbing dynamics so that control effectiveness reflects measurable change in catastrophe probability by horizon, not an arbitrary expert score.
We use the \friesak\ semantics, herein, and report governance-observability degradation through a separate additive term rather than redefining the base equation.

The main contributions of this paper are as follows:
\begin{itemize}
\item First, we define \cpsaint\ as a seven-layer structural decomposition with explicit path semantics.
\item Second, we define \friesak\  with locked variable boundaries and dynamic construction for $K(\pi,\tau,u)$.
\item Third, we state a composition theorem showing that a valid \cpsaint\ path induces a well-defined \friesak\ risk instance without changing the functional form.
\item Fourth, we demonstrate the framework on two materially different use cases: a real-time warehouse robot and a banking/financial-services agent.
\end{itemize}
The remainder of this paper is organized as follows. Section~\ref{sec:related-works} compresses the relevant literature.
Section~\ref{sec:system} defines the system model, notation, and threat boundary.
Section~\ref{sec:method} presents the proposed method: \cpsaint, \friesak, the dynamic-resistance construction, and the composition theorem.
Section~\ref{sec:evaluation} evaluates the framework analytically on two scenarios and reports the resulting sensitivity behavior.
Sections~\ref{sec:limits} and~\ref{sec:conclusion} provide limitations and conclusions.

\section{Related Works}
\label{sec:related-works}
This work builds on three bodies of work: structural decomposition of failure causes, quantitative adversarial risk estimation, runtime assurance, security, and governance for AI and embodied AI agents.

\label{sec:rw:decomposition}
The closest prior work falls into three families. Structural safety-security methods such as STAMP/STPA~\cite{Leveson2004STAMP}, STPA-SafeSec~\cite{Friedberg2017STPASafeSec}, and attack-defense trees~\cite{Kordy2014ADTrees} provide causal pathways, attacker-defender logic, and unsafe-control reasoning, but they do not map decomposed failure paths into a stable residual-risk model.

Modular agent frameworks such as DSPy~\cite{Khattab2024DSPy}, AutoGen~\cite{Wu2024AutoGen}, and MetaGPT~\cite{Hong2024MetaGPT} show that composition matters operationally, but they do not explicitly deal with fault boundaries, propagation semantics, and residual-risk accounting.
Empirical taxonomies such as MAST catalog failure patterns in multi-agent traces, but they do not supply a mechanism-to-magnitude calculus~\cite{Cemri2025MAST}.

\label{sec:rw:quantrisk}
Quantitative risk and tail-estimation methods provide the other half of the puzzle.
Adversarial Risk Analysis~\cite{RiosInsua2021ARAcyber} contributes a rigorous decision-theoretic treatment of intentional threats.
Coherent-risk theory, CVaR, ~\cite{Artzner1999CoherentRisk,RockafellarUryasev2000CVaR}, subset simulation~\cite{AuBeck2001SubsetSimulation}, and related rare-event methods provide the mathematical basis for conservative tail reporting and small-probability estimation.
Cascading-failure models show how small perturbations can propagate into systemic loss~\cite{Buldyrev2010Cascades}.
However, these lines of work mostly begin after the internal structural pathway has already been abstracted away making it difficult to leverage those models for the granular risk modeling needed for real-time agentic systems.

\label{sec:rw:runtime}

Runtime-assurance and security-containment work narrows behavior but typically at the level of a single agent or a single tool boundary. AgentSpec~\cite{Wang2026AgentSpec}, Pro2Guard~\cite{Wang2025Pro2Guard}, CaMeL~\cite{Debenedetti2025CaMeL}, IsolateGPT~\cite{Wu2025IsolateGPT}, and Progent~\cite{Shi2025Progent} all illustrate some type of constraint enforcement, probabilistic model checking, capability isolation, or privilege control. However, they do not propagate residual risk, thus losing risk fidelity across a composed agentic path with fixed semantics.
Dependable-AI work makes the final point. Assurance 2.0~\cite{BloomfieldRushby2024Assurance}, guaranteed-safe AI, and recent topology-resilience studies argue that architecture and explicit assumptions determine resilience~\cite{Huang2025Resilience}, but they do not give a cross-domain residual-risk functional grounded in structural failure paths.
Prior work~\cite{Karim2025IoRT} operationalizes NIST AI RMF governance functions at the workload level for LLM-integrated robotic systems, and in doing so, provides the empirical basis for \friesak's treatment of governance observability as a modeled state dimension rather than an external policy overlay.
We target that intersection. To our knowledge, no existing approach jointly fixes the failure grammar, the risk semantics, and the dynamic control interpretation in a single framework.

\section{System and Threat Model}
\label{sec:system}

\begin{table}[t]
\caption{Core notation}
\label{tab:notation}
\centering
\begin{tabular}{ll}
\toprule
Symbol & Meaning \\
\midrule
$\layers$ & Fixed layer set $\{P,S,D,C,A,E,T\}$ \\
$\Phi$ & Valid cross-layer propagation relation \\
$\paths$ & Set of valid attack propagation or failure paths \\
$\pi$ & Specific path in $\paths$ \\
$\tau$ & Finite evaluation horizon (s); domain-dependent \\
$\states$ & Finite controlled state space \\
$\controls$ & Admissible defender policies \\
$K(\pi,\tau,u)$ & Dynamic control effectiveness for path $\pi$ by horizon \\
$w_{\mathrm{gov}}$ & Governance weight scalar ($\geq 0$) \\
$\Rgov(\pi,\tau,u)$ & Governance dwell-time observability penalty \\
$\Wwarn$ & Warning states: degraded but non-catastrophic \\
$\pi_F(\pi,\tau,u)$ & Catastrophe-absorption probability by $\tau$ \\
$\risk(\pi,\tau,u)$ & \friesak\ risk for failure path $\pi$ \\
$\Rres$ & Reported residual-risk score with governance term \\
\bottomrule
\end{tabular}
\end{table}

We model an agentic or cyber-physical system as a labeled directed graph
\begin{equation}
\label{eq:gcps}
G_{\mathrm{CPS}}=(\layers,\Phi,\mathcal{M},\paths),
\end{equation}
where $\layers=\{P,S,D,C,A,E,T\}$ is the fixed \cpsaint\ layer set, $\Phi\subseteq \layers\times\layers$ is the valid propagation relation, $\mathcal{M}=\{corruption, delay, omission, replay, coupling\_abuse\}$ is the failure-mode alphabet, and $\paths$ is the set of valid failure paths admitted by the instantiated architecture. The graph is domain-agnostic.
Domain transfer occurs through instantiation of nodes, edges, and path content, not by changing the layer grammar itself.

We assume an adversary that seeks to induce integrity loss along one or more valid paths. Entry may occur through sensing, data ingestion, compute logic, tool execution, timing manipulation, or environmental coupling. Defender actions are encoded by a control policy $u\in\controls$. The evaluation horizon $\tau$ is finite and domain-dependent. For hard real-time systems it may be milliseconds or seconds. For enterprise agents it may be minutes or longer. A single $\tau$ may also represent a narrow operational analysis window within a domain whose full governance or audit lifecycle spans much longer intervals; the banking-agent instantiation in Section~\ref{sec:caseB} uses $\tau=0.75$\,s as a short-window operational slice, not as a model of the full assurance horizon.

Table~\ref{tab:notation} summarizes the core notation used throughout the paper.

\section{Proposed Method}
\label{sec:method}

\subsection{\cpsaint: Structural Failure Decomposition}

We define \cpsaint\ over the ordered layer tuple in Eq. (\ref{eq:failuretuple}):
\begin{equation}
\label{eq:failuretuple}
\Sigma=\langle P,S,D,C,A,E,T\rangle,
\end{equation}
where $P$ denotes physical plant or mission-relevant hardware state, $S$ sensing and signal acquisition, $D$ data/information representation and model inputs, $C$ compute and decision logic, $A$ actuation or side-effectful execution, $E$ external environment or counterparty context, and $T$ timing and synchronization. Each layer may exhibit one of five integrity-failure modes: corruption, delay, omission, replay, or coupling abuse.

\begin{definition}[Agent failure path]
A valid agent failure path is an ordered sequence (Eq. \ref{eq:agentfailurepath})
\begin{equation}
\label{eq:agentfailurepath}
\pi=[(\ell_1,m_1),(\ell_2,m_2),\ldots,(\ell_n,m_n)]
\end{equation}
such that each $\ell_i\in\layers$, each $m_i\in\mathcal{M}$, and each successive pair respects the propagation relation $\Phi$ of the instantiated system.
\end{definition}

The purpose of this construction, illustrated in Figure~\ref{fig:cpsaint}, is to establish a stable grammar for integrity propagation. In this way, a warehouse robot, a medical workflow agent, and a banking assistant can be expressed with the same layer set while preserving domain-specific path content.

Figure~\ref{fig:cpsaint} renders the layer set as an operational stack to show how failure propagates across architectural boundaries in each domain instantiation.
The formal layer symbols map to the figure as follows.
Layers~1--2 (Mission and Intent; Human and Governance Oversight) instantiate $E$, the external environment and counterparty context, which encompasses operator intent, governance principals, and regulatory constraints.
Layer~3 (Agent Orchestration and Planning) and Layer~4 (Application and Tooling Logic) both instantiate $C$, compute and decision logic, at different abstraction levels within the same domain.
Layer~5 (Data and State Management) instantiates $D$.
Layer~6 (Communication and Platform Services) instantiates $S$, sensing and signal acquisition, which in software-intensive agents includes API endpoints, message buses, and data ingestion interfaces.
Layer~7 (Physical Process and Actuation) jointly instantiates $P$ (physical plant) and $A$ (actuation or side-effectful execution).
$T$ (timing and synchronization) is a cross-cutting property: it governs the temporal constraints at each layer rather than occupying a dedicated architectural slot, consistent with the design principle that network delay maps to $T$ and data routing integrity maps to $D$.
\subsection{\friesak: Residual-Risk}

\friesak\ is defined in Equation (\ref{eq:friesak}):
\begin{equation}
\label{eq:friesak}
\risk(\pi,\tau,u)=\frac{F(\pi)\, R_i(\pi)\,E(\pi)\,S(\pi)\,A(\pi)}{K(\pi,\tau,u)},
\end{equation}

\noindent where, $F$ denotes the frequency of the threat. Reachability is given by $R_i$ and exploitability is expressed in $E$. The severity of the threat is captured with $S$. We captured the potential a threat being amplified with $A$. Many models fail to capture control effectiveness. We use control~$K$ to capture the resistance effectiveness of a control. If a factor makes attacks more likely or consequences more severe, it belongs in the numerator. If a factor makes attacks harder or recovery faster, it belongs in $K$. This separation prevents semantic drift and double counting.

We use a terminal-consequence interpretation for severity. That is, $S(\pi)$ reflects the terminal consequence of the path rather than the sum of intermediate disturbances. Intermediate broadening of impact is captured through $A(\pi)$, not by repeatedly re-counting severity at each step. This rule gives the most stable path semantics.

\subsection{Dynamic Resistance and Governance Observability}

We derive $K$ from path-conditioned controlled stochastic dynamics rather than from an informal scalar assessment as might be the tendency of risk management practitioners. For a fixed valid failure path $\pi$ and chosen analysis horizon $\tau$, let $X_{\pi}(t)$ be a finite-state controlled continuous-time Markov chain (CTMC) on $\states$ with generator $Q_{\pi}(u)$ under policy $u\in\controls$. The path determines the relevant hazard states, interfaces, and control touch points. The horizon $\tau$ remains an external finite evaluation window rather than a function of $\pi$. 
The state space is $\states = \Sigma_{\mathrm{op}} \times \Sigma_{\mathrm{obs}}$, where
$\Sigma_{\mathrm{op}}=\{\sigma_s,\sigma_c,\sigma_d,\sigma_n,\sigma_r,\sigma_f\}$
(safe, compromised, detected, contained, recovered, catastrophic)
and $\Sigma_{\mathrm{obs}}=\{\omega,\bar{\omega}\}$ (observable, unobservable),
giving ten states total.
Let $\Fcat\subset\states$ denote the catastrophic absorbing states and $\Wwarn\subset\states$ the materially degraded but non-catastrophic warning states. 
Transition rates $q_{ij}(u)$ within the generator matrix $Q_{\pi}(u)$ encode empirical propagation, detection, and recovery hazards.
The generator structure and all rate parameters are provided in the open-source
implementation.\footnote{\url{https://github.com/coderhard/friesa-k-DSN-crai2026}}
We define catastrophe probability by horizon $\tau$ as
\begin{equation}
\pi_F(\pi,\tau,u)=\mathbb{P}_u\big(X_{\pi}(\tau)\in\Fcat\mid X_{\pi}(0)=s_{0,\pi}\big).
\end{equation}
We then define dynamic control effectiveness as the baseline-to-controlled benefit factor
\begin{equation}
\label{eq:k_dynamic}
K(\pi,\tau,u)=\frac{\pi_F(\pi,\tau,u_0)}{\pi_F(\pi,\tau,u)}.
\end{equation}
If a control policy lowers catastrophic absorption by horizon for the instantiated path, $K(\pi,\tau,u)$ must be $> 1$. If it does not, the model makes that visible.

This notation resolves the semantics of Eq. \eqref{eq:friesak}: $K$ is path-conditioned because the controlled process is instantiated on the failure path under study, not because the horizon is itself path-valued. Controls deployed on layers or interfaces not traversed by $\pi$ do not alter this instantiated process for that path. When control effects are approximately separable across distinct path transitions, a useful approximation is
\begin{equation}
\label{eq:k_factor_approx}
K(\pi,\tau,u)\approx \prod_{j\,:\,c_j\in\controls(\pi)} k_j(\tau,u),
\end{equation}
where $\controls(\pi)\subseteq\controls$ denotes the path-relevant control set and $k_j(\tau,u)$ is the benefit factor contributed by control $c_j$. We do not assume exact factorization as a general law; overlapping or coupled controls must be modeled jointly to avoid double counting.

To accommodate governance-sensitive settings that require auditability and assurance accounting, we report a separate governance-observability penalty in Eq. \ref{eq:rgov} instead of creating another \friesak\ variable.
\begin{equation}
\label{eq:rgov}
\Rgov(\pi,\tau,u)=\int_0^{\tau}\sum_{s\in\Wwarn}p^{\pi}_{(s,\textsf{NoObs})}(t)\,dt,
\end{equation}
where $\Wwarn\subset\states$ contains materially degraded but non-catastrophic warning states, \textsf{NoObs} marks weak observability, and $p^{\pi}_{(s,\textsf{NoObs})}(t)$ denotes occupancy probability on the same path-instantiated process. We then report the residual score
\begin{equation}
\label{eq:jres}
\Rres(\pi,\tau,u)=\risk(\pi,\tau,u)+w_{\mathrm{gov}}\Rgov(\pi,\tau,u),
\end{equation}
with $w_{\mathrm{gov}}\ge 0$ chosen by governance priority. This preserves a clear separation between operational residual risk and governance-observability degradation.

\begin{figure}[t]
\centering
\parbox{0.75\columnwidth}{%
\centering
\includegraphics[
  width=\linewidth,
  trim=12 28 12 24,
  clip
]{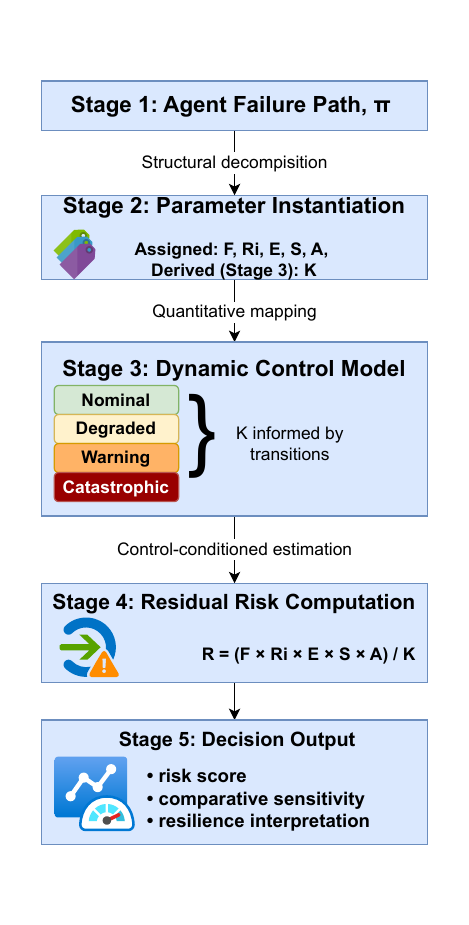}%
}
\caption{Mechanism-to-magnitude pipeline. A valid \cpsaint\ path induces a \friesak\ tuple, while path-conditioned controlled dynamics instantiate $K(\pi,\tau,u)$ and the auxiliary governance-observability penalty.}
\label{fig:pipeline}
\end{figure}

\subsection{Composability}

\begin{definition}[Structural composability]
We call the framework structurally composable if the same fixed layer grammar and failure-mode alphabet can express valid failure paths across domains without redefining the decomposition.
\end{definition}

\begin{definition}[Quantitative composability]
We call the framework quantitatively composable if a valid structural path induces a well-defined \friesak\ instance using defined variable semantics, while preserving the functional form of \eqref{eq:friesak} across domains.
\end{definition}

\begin{theorem}[Composition theorem]
\label{thm:composition}
Let $G_{\mathrm{CPS}}=(\layers,\Phi,\mathcal{M},\paths)$ be an instantiated system and let $\pi\in\paths$ be a valid failure path.
Suppose each step in the attack propagation path $\pi$ admits domain-specific instantiations of $F$, $R_i$, $E$, $S$, and $A$ under the semantics of \friesak, and suppose a defender policy $u$ induces a finite-state absorbing process with well-defined $K(\pi,\tau,u)$ from \eqref{eq:k_dynamic}.
Then $\pi$ induces a well-defined risk instance $\risk(\pi,\tau,u)$ under \eqref{eq:friesak}. 
If governance observability is modeled, the same path also induces a well-defined reported score $\Rres(\pi,\tau,u)$ under \eqref{eq:jres}. 
The mapping preserves functional form across domains. Only path content and parameter values change.
\end{theorem}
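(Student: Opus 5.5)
The argument is constructive: I would show that each ingredient of \eqref{eq:friesak} is a well-defined finite real number determined by $(\pi,\tau,u)$, and then read off the functional-form claim from the fact that nothing in the construction depends on the domain beyond path content and parameter values.

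\emph{Step 1: the numerator.} By hypothesis, each step $(\ell_i,m_i)$ of $\pi$ carries domain instantiations of $F,R_i,E,S,A$. These per-step values have to be aggregated into path-level scalars, and I would fix the aggregation by the \friesak\ semantics:
\begin{itemize}
\item $S(\pi)$ is the terminal consequence, i.e.\ the value at $(\ell_n,m_n)$, per the terminal-consequence rule;
\item $A(\pi)$ captures intermediate broadening, taken as a product or maximum over the steps;
\item $F(\pi)$ is the frequency at the entry step;
\item $R_i(\pi)$ and $E(\pi)$ compose along the path, taken as products of the per-step conditional values.
\end{itemize}
Because $\pi$ is a finite sequence that respects $\Phi$, each of these is a finite nonnegative real. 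Since $S$ is not re-counted and numerator factors are kept separate from $K$, no quantity enters twice. This gives a well-defined numerator $N(\pi)\in[0,\infty)$.

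\emph{Step 2: the denominator, which I expect to be the main obstacle.} $K(\pi,\tau,u)$ is well defined only if $\pi_F(\pi,\tau,u)>0$ and $\pi_F(\pi,\tau,u_0)$ is finite.
\begin{itemize}
\item \emph{Existence and uniqueness of $\pi_F$.} $\states$ is finite (ten states) and $Q_\pi(u)$ is a bounded generator. Hence the transition matrix $e^{\tau Q_\pi(u)}$ exists and is unique, and $\pi_F(\pi,\tau,u)=\sum_{s\in\Fcat}\big[e^{\tau Q_\pi(u)}\big]_{s_{0,\pi},s}\in[0,1]$.
\item \emph{Positivity.} If some $s\in\Fcat$ is reachable from $s_{0,\pi}$ in the rate graph of $Q_\pi(u)$, then the uniformization series gives $\pi_F>0$ for every $\tau>0$.
\end{itemize}
Positivity is exactly what the statement's hypothesis ``well-defined $K$'' supplies. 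I would state this explicitly as a reachability condition, and note that when it fails the quantity $K$ is undefined rather than infinite, which the model makes visible. Since $\pi_F(\pi,\tau,u_0)\ge 0$, it follows that $K>0$.

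\emph{Step 3: assembling $\risk$ and $\Rres$.} Combining Steps 1 and 2, $\risk=N(\pi)/K(\pi,\tau,u)$ is a finite nonnegative real.

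For $\Rres$, the occupancy functions $p^\pi_{(s,\textsf{NoObs})}(t)$ are entries of $e^{tQ_\pi(u)}$. They are therefore continuous in $t$ and bounded by $1$. The integral in \eqref{eq:rgov} thus exists and satisfies $0\le\Rgov\le\tau|\Wwarn|$. With $w_{\mathrm{gov}}\ge0$, $\Rres$ in \eqref{eq:jres} is then finite and nonnegative.

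\emph{Step 4: preservation of form.} Finally, every map used above is fixed independently of the domain:
\begin{itemize}
\item the step-to-path aggregation rules;
\item the matrix exponential;
\item the ratio \eqref{eq:k_dynamic};
\item the quotient \eqref{eq:friesak};
\item the sum \eqref{eq:jres}.
\end{itemize}
Domain specificity enters only through $\Phi$, the sequence $\pi$, the per-step parameters, and $Q_\pi(u)$. Two instantiations therefore differ only in path content and parameter values, which is the final claim.
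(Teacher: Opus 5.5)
Your route matches the paper's constructive proof: attach each numerator variable to its path-level role, obtain $K$ from the finite-state CTMC, substitute into \eqref{eq:friesak}, note that \eqref{eq:rgov} is defined on the same process, and read off form preservation from the fixed grammar and semantics. Your explicit treatment of $\pi_F$ via $e^{\tau Q_\pi(u)}$, the reachability criterion for $\pi_F(\pi,\tau,u)>0$, and the bound on $\Rgov$ are more careful than the paper's text, which only asserts that $K$ ``exists and is non-negative.'' One step fails, however. From $\pi_F(\pi,\tau,u_0)\ge 0$ and $\pi_F(\pi,\tau,u)>0$ you conclude $K>0$, but only $K\ge 0$ follows. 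Given the hypothesis, $K=0$ occurs exactly when $\Fcat$ is reached by $\tau$ with positive probability under $u$ but with probability zero under the baseline $u_0$, i.e.\ a control that opens a hazard route. The paper explicitly allows non-beneficial controls (``If it does not, the model makes that visible''). In that case $K$ is perfectly well defined, so the theorem's hypothesis does not exclude it. Yet $\risk=N(\pi)/K$ is then a division by zero, and your Step~3 conclusions that $\risk$ and $\Rres$ are finite reals do not follow.

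The repair is to assume, or derive from an explicit modeling condition, that $\pi_F(\pi,\tau,u_0)>0$. By the same uniformization argument you already use, for $\tau>0$ this holds if and only if some state of $\Fcat$ is reachable from $s_{0,\pi}$ in the rate graph of $Q_\pi(u_0)$. That is the natural formal content of ``$\pi$ is a failure path under the baseline,'' but it must be stated: neither validity of $\pi$ with respect to $\Phi$ nor well-definedness of $K$ implies it. The paper's own proof has the same hole, since it divides by a $K$ it has only shown to be non-negative, and its subsequent Remark simply presupposes $K>0$. A smaller point: in Step~1 you leave $A(\pi)$ as ``a product or maximum'' over steps, while Step~4 relies on the aggregation rules being fixed independently of the domain. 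You should either commit to one rule as part of the \friesak\ semantics, or, as the paper does, take the path-level values as given directly by the semantics so that no aggregation choice arises.
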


\begin{proof}
The proof is constructive. A valid path specifies the ordered layer transitions and failure modes that carry the attack or disturbance forward. We attach $F$ to initiating pressure over the horizon, $\Ri$ to reachable exposure along the path, $E$ to conditional technical feasibility, $S$ to the terminal consequence, and $A$ to cross-layer broadening of impact. By assumption, the path-conditioned controlled CTMC is finite-state with well-defined absorption probability by horizon. Thus, $K(\pi,\tau,u)$ exists and is non-negative. Substituting the instantiated quantities into \eqref{eq:friesak} yields a non-negative risk value. If observability is modeled, \eqref{eq:rgov} is also well-defined on the same state process, which yields the reported score in \eqref{eq:jres}. Because neither the layer grammar nor the variable semantics change, the mapping composes across domains without redefining the risk functional.
\end{proof}

\begin{remark}[Interpretation of control effectiveness]
For fixed $F,\Ri,E,S,A \ge 0$ and $K>0$, the FRIESA-K residual risk
\[
\risk = \frac{F \Ri E S A}{K}
\]
decreases as control effectiveness $K$ increases. Likewise, if
\[
\Rres = \risk + w_{\mathrm{gov}}\Rgov,
\]
with $w_{\mathrm{gov}} \ge 0$, then higher governance degradation yields higher residual governance burden.
\end{remark}

Figure~\ref{fig:pipeline} illustrates the five-stage mechanism-to-magnitude pipeline, from agent failure-path decomposition through dynamic control modeling to residual risk computation and decision output.

\section{Performance Evaluation}
\label{sec:evaluation}

We evaluate the framework through Gillespie SSA fault-injection simulation and deterministic sensitivity analysis across two contrasting scenarios.
The simulation injects failure conditions at the entry layer of each instantiated path and tracks catastrophic absorption probability over the evaluation horizon, producing both a deterministic residual-risk estimate and a Monte Carlo distribution ($N=5{,}000$ trajectories) over uncertain parameter values.
Can one structural decomposition and one risk function support materially different agentic settings without changing
  their semantics?

We answer that question through two scenarios and one shared sensitivity view.

Risk scores throughout reflect the product of instantiated parameter values at bundle-default magnitudes; they are not calibrated to monetary or physical loss units and should be interpreted comparatively, not as absolute loss estimates.

\subsection{Case A: Hard Real-Time Warehouse Robot}
\label{sec:caseA}

We consider an autonomous warehouse robot, $\mathrm{wr}$, that fuses sensor input and model-based planning to avoid collisions under a hard stopping deadline~\cite{karim2026forgebench}. The representative path is given by Eq. \ref{eq:robotfailurepath}:
{\small
\begin{equation}
\label{eq:robotfailurepath}
\begin{split}
\pi_{\mathrm{wr}} ={}& [(S,\textsf{corruption}),(D,\textsf{corruption}),\\
& (C,\textsf{delay}),(T,\textsf{delay}),(A,\textsf{coupling\_abuse})].
\end{split}
\end{equation}
}

The path begins in sensing, propagates through data and compute, violates timing constraints, and culminates in unsafe actuation. Here, the terminal consequence is physical collision or unsafe motion. Thus, $S(\pi_{\mathrm{wr}})$ is high. 
Amplification is also high because delay in $T$ enlarges the consequence of upstream errors. A fast safety interlock, bounded-latency fallback, or PLC-mediated containment logic reduces catastrophic absorption by horizon and therefore increases $K(\pi_{\mathrm{wr}},\tau,u)$ sharply. The governance penalty is secondary. The main question is whether containment outruns escalation. 
Here $\tau=0.75$\,s, $w_{\mathrm{gov}}=0.35$, \texttt{normalize\_gov=true}: $K=1.527$, $\pi_F=0.506$, $\Rres=2{,}563{,}283$. Fault-injection simulation ($N=5{,}000$ SSA trajectories, seed~42): mean $2{,}553{,}038$, p95 $6{,}766{,}279$.

\subsection{Case B: Financial-Services Agent with Governance-Observability Instrumentation}
\label{sec:caseB}

Next, we consider a banking-services agent, that processes customer loan documents. The agent, $\mathrm{fa}$, orchestrates document intake, tool calls, policy checks, and customer communication. The representative path is

{\small
    \begin{equation}
    \begin{split}
    \pi_{\mathrm{fa}}={}& [(D,\textsf{corruption}),(C,\textsf{corruption}),\\
    & (T,\textsf{delay}),(E,\textsf{coupling\_abuse})].
    \end{split}
    \end{equation}
}
A corrupted record or prompt payload enters the data layer, influences compute logic, interacts with delayed review timing, and propagates into the external workflow state. Here, immediate catastrophic operational loss may remain moderate. However, weak logging, broken explanation chains, or incomplete reviewer traceability can keep occupancy in degraded but weakly observable states non-trivial. Thus, $K(\pi_{\mathrm{fa}},\tau,u)$ may improve the operational term, while $\Rgov(\pi_{\mathrm{fa}},\tau,u)$ remains positive. The scenario shows why composable trust cannot be reduced to catastrophe avoidance alone. 
Here $\tau=0.75$\,s, $w_{\mathrm{gov}}=0.90$, \texttt{normalize\_gov=true}: $K=1.339$, $\pi_F=0.223$, $\Rres=966{,}412$. Fault-injection simulation ($N=5{,}000$ SSA trajectories, seed~42): mean $954{,}231$, p95 $2{,}348{,}066$.
At this sub-second horizon $\Rgov$ is numerically small. The scenario demonstrates the structural governance mechanism rather than a quantitative $\Rgov$ magnitude (see short-horizon disclosure in Section~\ref{sec:sensitivity}).

Both scenarios are summarized in Table~\ref{tab:cases}.
\begin{table*}[t]
\caption{Illustrative mapping from failure path to quantified residual risk}
\label{tab:cases}
\centering
\begin{tabular}{p{2.2cm}p{3.2cm}p{2.4cm}p{2.6cm}p{1.5cm}p{2.8cm}}
\toprule
Case & Path signature & Terminal consequence & Dominant control interpretation & $K$ effect & Residual-score behavior \\
\midrule
Warehouse robot & $S \rightarrow D \rightarrow C \rightarrow T \rightarrow A$ & Physical collision or unsafe motion & Fast containment, interlock, fallback timing & Large increase & Base \friesak\ term drops sharply when containment outruns escalation \\
Financial-services agent & $D \rightarrow C \rightarrow T \rightarrow E$ & Workflow misuse, assurance loss, auditability degradation & Tool guardrails plus observability instrumentation & Moderate increase & Operational term dominates at $\tau=0.75$\,s; $\Rgov$ structurally non-zero but numerically minor at this horizon; governance penalty grows material at longer evaluation windows \\
\bottomrule
\end{tabular}
\end{table*}

\subsection{Sensitivity and Discussion}
\label{sec:sensitivity}

The framework exposes direct comparative sensitivities. For fixed $\Ri,E,S,A$, the \friesak\ functional is linear in $F$ and inverse in $K$:
\[
\risk \propto F,
\qquad
\risk \propto \frac{1}{K}.
\]
Thus, larger attack frequency increases residual risk, while stronger control effectiveness reduces it.

\begin{figure}[t]
\centering
\includegraphics[width=\columnwidth]{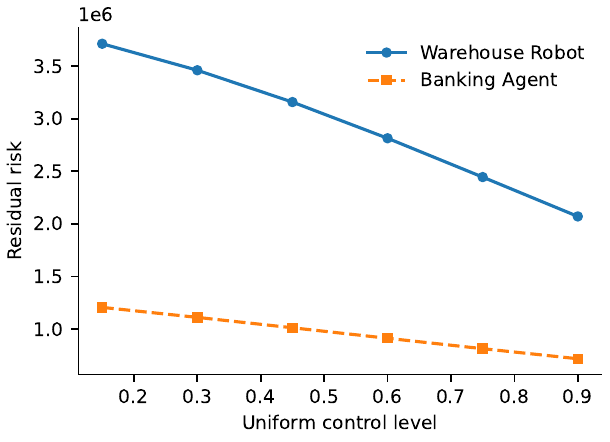}
\caption{Sensitivity sweep: residual risk $\Rres$ versus uniform control level for the
warehouse-robot (response-dominant, $w_{\mathrm{gov}}=0.35$) and banking-agent
(governance-instrumented, $w_{\mathrm{gov}}=0.90$) scenarios.
Both curves use $\tau=0.75$\,s and \texttt{normalize\_gov=true}.
Deterministic path; bundle-default parameter values.}
\label{fig:k_sweep_det}
\end{figure}

Figure~\ref{fig:k_sweep_det} shows the deterministic sensitivity
of $\Rres$
to uniform control level for both cases.
The warehouse-robot residual risk drops sharply with improving controls
because high containment and recovery elasticities
($\beta_c = 2.1$, $\beta_r = 1.8$) translate small control
improvements into large reductions in catastrophic absorption
probability.
The banking-agent curve is flatter because lower hazard rates and a smaller
$S\times A$ product ($0.9\times10^6\times1.25$ vs.\ $1.8\times10^6\times1.65$)
reduce the baseline operational term and limit the K-induced reduction.
The governance-observability weight is set to $w_{\mathrm{gov}}=0.90$ to
reflect governance priority, but at $\tau=0.75$\,s the additive $\Rgov$
term is numerically small relative to the operational term; see the
short-horizon disclosure below.
The warehouse robot carries higher absolute $\Rres$ despite the lower governance weight. Steeper hazard rates and a larger $S\times A$ product dominate. $K$ gain alone cannot close that gap.

\begin{figure}[t]
\centering
\includegraphics[width=\columnwidth]{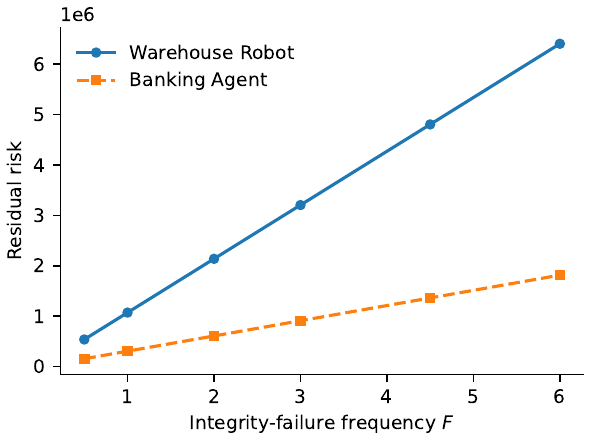}
\caption{Residual risk $\Rres$ as a function of integrity-failure
frequency $F$, all other parameters held at bundle nominals.
Both cases scale linearly with $F$, consistent with the analytical form of \eqref{eq:friesak}.
The warehouse robot shows a higher slope because $S(\pi_{\mathrm{wr}})\times
A(\pi_{\mathrm{wr}})$ is larger. The banking agent shows a shallower slope
because lower amplification ($\mu_A=1.25$ vs.\ $1.65$) and lower severity
keep the numerator product modest even at high event rates.}
\label{fig:freq_sweep}
\end{figure}
Figure~\ref{fig:freq_sweep} sweeps $F$ while holding all other parameters at bundle defaults.
Both curves are linear in $F$ as the risk formula requires. 
The warehouse robot
slope exceeds the banking agent slope because $S \times A$ is larger in the
physical case ($1.8\times10^{6} \times 1.65$ vs.\ $0.9\times10^{6} \times 1.25$).

One practical note merits disclosure. At $\tau=0.75$\,s, the governance-observability penalty $\Rgov$ is numerically negligible relative to the \friesak\ operational term for both scenarios. This is expected: a sub-second horizon provides insufficient dwell time for unobservable compromised states to accumulate a material integral. The banking-agent scenario demonstrates the governance \emph{structural mechanism}. The framework represents that a governance-priority system requires separate observability accounting. But the magnitude of $\Rgov$ becomes material only at longer evaluation horizons. 
Clinical workflow scenarios (e.g., a 12-hour post-surgical shift, $\tau=43{,}200$\,s) exhibit substantial $\Rgov$ contributions. 
Fault-injection runs on the oncology delegation bundles at this horizon confirm that $\Rgov$ becomes the dominant term when governance weight $w_{\mathrm{gov}}=0.85$ and unobservable dwell time accumulates over hours
  rather than milliseconds.

Therefore, these comparative cases are best read  as a demonstration of the operational mechanics of \friesak. The governance dimension is demonstrated at scale via longer-horizon bundles.

The evaluation supports three conclusions. First, the same \cpsaint\ grammar expresses materially different paths without changing the layer set. Second, the same \friesak\ semantics remain intact across both cases. Third, dynamic resistance and governance observability separate two conditions that static scoring often conflates: operational safety improvement and residual assurance degradation. That distinction matters. It is the reason the framework establishes a composable trust and avoids monolithic brittleness.

Existing baselines offer partial views. Structural frameworks (e.g., STPA) identify hazards without quantifying residual risk, while probabilistic estimators (e.g., CVaR) quantify risk but black-box the failure path. Neither supports targeted control optimization across end-to-end workflows. 
Runtime-assurance methods such as AgentSpec enforce per-agent constraints with binary guarantees but do not propagate quantitative residual risk across composed paths.
By coupling \cpsaint\ with \friesak, we provide a concise mechanism to evaluate competing interventions. That is, we mathematically distinguish the risk-reduction value of data-layer detection ($D$) versus actuation-layer containment ($A$). Thus, qualitative hazard paths become measurable risk instances.

\subsection{Uncertainty and One-at-a-Time Weak-Control Comparison}
\label{sec:uncertainty-ablation}

Figure~\ref{fig:k_sweep_unc} extends the deterministic sensitivity view in Figure~\ref{fig:k_sweep_det} with Monte Carlo mean curves and p50--p90 bands. The uncertainty envelope widens the absolute risk range but does not reverse the qualitative ordering between the two scenarios. The warehouse-robot band is broader in absolute terms because uncertainty in severity and amplification multiplies through a larger nominal loss scale. The banking-agent curve remains flatter across the control grid, which is consistent with lower operational hazard rates and a comparatively smaller shift in the operational term as uniform controls improve.

\begin{figure}[t]
\centering
\includegraphics[width=\columnwidth]{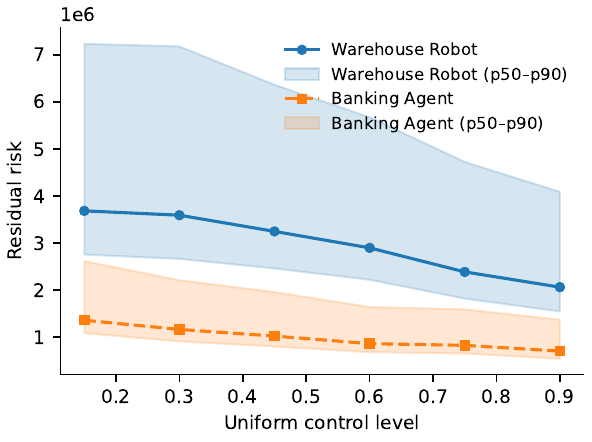}
\caption{Residual risk $\Rres$ versus uniform control level with Monte Carlo mean curves and p50--p90 bands for both scenarios. Uncertainty widens the envelope but preserves the qualitative ordering observed in the deterministic sweep.}
\label{fig:k_sweep_unc}
\end{figure}

Figure~\ref{fig:ablation} reports the absolute reported Residual Risk $\Rres$ under baseline and one-at-a-time weakened-control conditions. The grouped bars should be read as a weak-control comparison, not as a fractional-removal analysis. For the warehouse robot, the largest visible increase occurs under weak response and weak detection. This is consistent with a path in which containment and time-critical intervention dominate operational risk. For the banking agent, weak detect produces the largest visible increase in total $\Rres$, while weak response and weak recovery move the score only modestly.

\begin{figure}[t]
\centering
\includegraphics[width=\columnwidth]{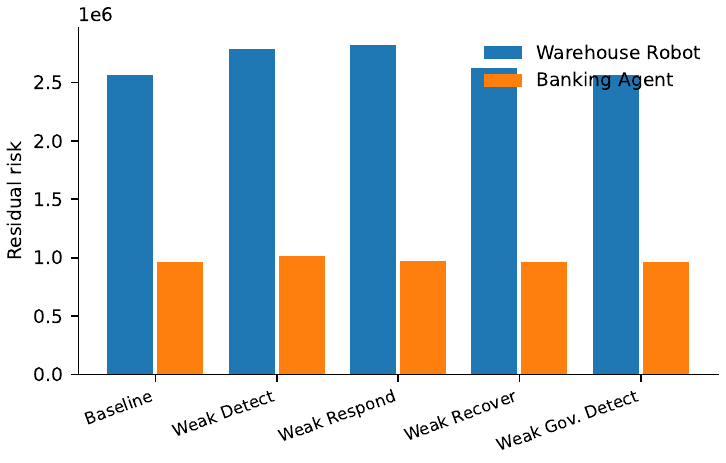}
\caption{One-at-a-time weak-control comparison for the warehouse-robot and banking-agent scenarios. Each bar shows the absolute reported score $\Rres$ under the baseline bundle and under isolated weakened-control conditions.}
\label{fig:ablation}
\end{figure}

The governance result requires a narrower reading. Weak governance detection produces little visible movement in total $\Rres$ at $\tau=0.75$\,s for either case, even though governance observability remains structurally modeled. That behavior is consistent with the short-horizon disclosure above: over a sub-second window, the additive $\Rgov(\pi,\tau,u)$ term remains numerically small relative to the operational \friesak\ term. Thus, the present ablation supports a bounded claim. The current short-horizon bundles are most sensitive to response and operational detection on the warehouse path, and to operational detection on the banking path, while governance-heavy effects become materially visible only under longer-horizon instantiations.

\section{Limitations}
\label{sec:limits}

The controlled CTMC assumes a finite-state abstraction rich enough to capture the materially relevant modes for the hazard class, however calibration remains domain-specific. We demonstrate instantiation, not a universally calibrated estimator. Correlation among numerator terms and overlap among controls can bias naive factorizations if left unjustified. We therefore use this paper to lock semantics and prove the mechanism-to-magnitude mapping, not to claim complete dependence modeling. We also keep the seven-layer decomposition intentionally compact. Some domains may require finer internal structure inside a layer. That refinement can be added without changing the core grammar.

\section{Conclusion}
\label{sec:conclusion}

Agentic AI requires a framework that links the structural mechanism of 
failure to the magnitude of residual risk with fixed semantics that 
transfer across domains. \cpsaint\ provides the structural half: a 
seven-layer grammar with an explicit propagation relation and a 
five-mode failure alphabet. \friesak\ provides the quantitative half: 
locked variable boundaries, a dynamic resistance term grounded in 
absorbing Markov state behavior, and an additive governance-observability 
penalty that extends the framework without redefining the canonical 
risk equation. The composition theorem makes the linkage formal, and 
the two contrasting case studies confirm it in practice. The result is 
a compact, cross-domain kernel for mechanism-to-magnitude reasoning 
with explicit assumptions and composable trust semantics.

\section{Future Work}
\label{sec:future_work}

Several open problems remain. First, empirical calibration of 
\friesak\ parameters in operational deployments would replace 
expert-assigned values with measured priors. Second, the current model treats the numerator terms as independent, and in so doing, modeling correlation between $F$, $R_i$, $E$, $S$, and $A$ is a natural extension. 
Third, multi-agent settings introduce overlapping control surfaces that the single-path composition theorem does not yet cover. 
Finally, automated path extraction from system architecture artifacts would reduce the manual effort currently required to instantiate a 
\cpsaint\ decomposition.

\bibliographystyle{IEEEtran}
\bibliography{references}

@misc{Gartner2025AgenticCancel,
    author       = {{Gartner}},
    title        = {Gartner Predicts Over 40\% of Agentic {AI} Projects Will Be Canceled by End of 2027},
    howpublished = {\url{https://www.gartner.com/en/newsroom/press-releases/2025-06-25-gartner-predicts-over-40-percent-
  of-agentic-ai-projects-will-be-canceled-by-end-of-2027}},
    month        = jun,
    year         = {2025},
    note         = {Accessed: 2026-04-09}
  }

@misc{dehghantanha2026sokatksurfagentic,
      title={SoK: The Attack Surface of Agentic AI -- Tools, and Autonomy}, 
      author={Ali Dehghantanha and Sajad Homayoun},
      year={2026},
      eprint={2603.22928},
      archivePrefix={arXiv},
      primaryClass={cs.CR},
      url={https://arxiv.org/abs/2603.22928}, 
}

@article{Leveson2004STAMP,
  author  = {Leveson, Nancy G.},
  title   = {A New Accident Model for Engineering Safer Systems},
  journal = {Safety Science},
  volume  = {42},
  number  = {4},
  pages   = {237--270},
  year    = {2004},
  doi     = {10.1016/S0925-7535(03)00047-X}
}

@article{Friedberg2017STPASafeSec,
  author  = {Friedberg, Ivo and McLaughlin, Kieran and Smith, Paul and Laverty, David and Sezer, Sakir},
  title   = {{STPA-SafeSec}: Safety and Security Analysis for Cyber-Physical Systems},
  journal = {Journal of Information Security and Applications},
  volume  = {34},
  pages   = {183--196},
  year    = {2017},
  doi     = {10.1016/j.jisa.2016.05.008}
}

@article{Kordy2014ADTrees,
  author  = {Kordy, Barbara and Mauw, Sjouke and Radomirovi\'{c}, Sa\v{s}a and Schweitzer, Patrick},
  title   = {Attack--Defense Trees},
  journal = {Journal of Logic and Computation},
  volume  = {24},
  number  = {1},
  pages   = {55--87},
  year    = {2014},
  doi     = {10.1093/logcom/exs029}
}

@inproceedings{Khattab2024DSPy,
  author={Khattab, Omar and Singhvi, Arnav and Maheshwari, Paridhi and Zhang, Zhiyuan and Santhanam, Keshav and Vardhamanan, Sri and Haq, Saiful and Sharma, Ashutosh and Joshi, Thomas T and Moazam, Hanna and others},
  title     = {{DSPy}: Compiling Declarative Language Model Calls into Self-Improving Pipelines},
  booktitle = {Proc. Int. Conf. Learning Representations (ICLR)},
  year      = {2023},
  note      = {arXiv:2310.03714}
}

@inproceedings{Wu2024AutoGen,
  author    = {Wu, Qingyun and Bansal, Gagan and Zhang, Jieyu and Wu, Yiran and Li, Beibin and Zhu, Erkang and Jiang, Li and Zhang, Xiaoyun and Zhang, Shaokun and Liu, Jiale and others},
  title     = {{AutoGen}: Enabling Next-Gen {LLM} Applications via Multi-Agent Conversation},
  booktitle = {Proc. Conf. Language Modeling (COLM)},
  year      = {2024},
  note      = {arXiv:2308.08155}
}

@inproceedings{Hong2024MetaGPT,
  author    = {Hong, Sirui and Zhuge, Mingchen and Chen, Jonathan and Zheng, Xiawu and Cheng, Yuheng and Zhang, Ceyao and Wang, Jinlin and Wang, Zili and Yau, Steven Ka Shing and Lin, Zijuan and others},
  title     = {{MetaGPT}: Meta Programming for a Multi-Agent Collaborative Framework},
  booktitle = {Proc. Int. Conf. Learning Representations (ICLR)},
  year      = {2024},
  note      = {arXiv:2308.00352}
}

@inproceedings{Cemri2025MAST,
  author    = {Cemri, Mert and Pan, Melissa Z. and Yang, Shuyi and Agrawal, Lakshya A. and Chopra, Bhavya and Albarghouthi, Aws and Jha, Somesh and Lakkaraju, Himabindu},
  title     = {Why Do Multi-Agent {LLM} Systems Fail?},
  booktitle = {Advances in Neural Information Processing Systems (NeurIPS), Datasets and Benchmarks Track},
  year      = {2025},
  note      = {Spotlight. arXiv:2503.13657}
}

@article{RiosInsua2021ARAcyber,
  author  = {Rios Insua, David and Couce-Vieira, Aitor and Rubio, Jose A. and Pieters, Wolter and Labunets, Katsiaryna and Rasines, Daniel G.},
  title   = {An Adversarial Risk Analysis Framework for Cybersecurity},
  journal = {Risk Analysis},
  volume  = {41},
  number  = {1},
  pages   = {16--36},
  year    = {2021},
  doi     = {10.1111/risa.13331}
}

@article{Artzner1999CoherentRisk,
  author  = {Artzner, Philippe and Delbaen, Freddy and Eber, Jean-Marc and Heath, David},
  title   = {Coherent Measures of Risk},
  journal = {Mathematical Finance},
  volume  = {9},
  number  = {3},
  pages   = {203--228},
  year    = {1999},
  doi     = {10.1111/1467-9965.00068}
}

@article{RockafellarUryasev2000CVaR,
  author  = {Rockafellar, R. Tyrrell and Uryasev, Stanislav},
  title   = {Optimization of Conditional Value-at-Risk},
  journal = {Journal of Risk},
  volume  = {2},
  number  = {3},
  pages   = {21--41},
  year    = {2000},
  URL     = {https://doi.org/10.21314/JOR.2000.038}
}

@article{AuBeck2001SubsetSimulation,
  author  = {Au, Siu-Kui and Beck, James L.},
  title   = {Estimation of Small Failure Probabilities in High Dimensions by Subset Simulation},
  journal = {Probabilistic Engineering Mechanics},
  volume  = {16},
  number  = {4},
  pages   = {263--277},
  year    = {2001},
  doi     = {10.1016/S0266-8920(01)00019-4}
}

@article{Buldyrev2010Cascades,
  author  = {Buldyrev, Sergey V. and Parshani, Roni and Paul, Gerald and Stanley, H. Eugene and Havlin, Shlomo},
  title   = {Catastrophic Cascade of Failures in Interdependent Networks},
  journal = {Nature},
  volume  = {464},
  number  = {7291},
  pages   = {1025--1028},
  year    = {2010},
  doi     = {10.1038/nature08932}
}

@article{Karim2025IoRT,
  author  = {Karim, Hassan and Gupta, Deepti and Sitharaman, Sai},
  title   = {Securing {LLM} Workloads With {NIST AI RMF} in the Internet of Robotic Things},
  journal = {IEEE Access},
  volume  = {13},
  pages   = {69631--69649},
  year    = {2025},
  doi     = {10.1109/ACCESS.2025.3561235}
}

@misc{Wang2026AgentSpec,
      title={AgentSpec: Customizable Runtime Enforcement for Safe and Reliable LLM Agents}, 
      author={Haoyu Wang and Christopher M. Poskitt and Jun Sun},
      year={2025},
      eprint={2503.18666},
      archivePrefix={arXiv},
      primaryClass={cs.AI},
      url={https://arxiv.org/abs/2503.18666}, 
}

@misc{Wang2025Pro2Guard,
      title={Pro2Guard: Proactive Runtime Enforcement of {LLM} Agent Safety via Probabilistic Model Checking}, 
      author={Haoyu Wang and Christopher M. Poskitt and Jun Sun and Jiali Wei},
      year={2026},
      eprint={2508.00500},
      archivePrefix={arXiv},
      primaryClass={cs.AI},
      url={https://arxiv.org/abs/2508.00500}, 
}

@misc{Debenedetti2025CaMeL,
      title    = {Defeating Prompt Injections by Design}, 
      author    = {Edoardo Debenedetti and Ilia Shumailov and Tianqi Fan and Jamie Hayes and Nicholas Carlini and Daniel Fabian and Christoph Kern and Chongyang Shi and Andreas Terzis and Florian Tramèr},
      year    = {2025},
      eprint    = {2503.18813},
      archivePrefix    = {arXiv},
      primaryClass    = {cs.CR},
      url    = {https://arxiv.org/abs/2503.18813}, 
}

@misc{Wu2025IsolateGPT,
      title={IsolateGPT: An Execution Isolation Architecture for LLM-Based Agentic Systems}, 
      author={Yuhao Wu and Franziska Roesner and Tadayoshi Kohno and Ning Zhang and Umar Iqbal},
      year={2025},
      eprint={2403.04960},
      archivePrefix={arXiv},
      primaryClass={cs.CR},
      url={https://arxiv.org/abs/2403.04960}, 
}

@misc{Shi2025Progent,
      title={Progent: Programmable Privilege Control for LLM Agents}, 
      author={Tianneng Shi and Jingxuan He and Zhun Wang and Hongwei Li and Linyu Wu and Wenbo Guo and Dawn Song},
      year={2025},
      eprint={2504.11703},
      archivePrefix={arXiv},
      primaryClass={cs.CR},
      url={https://arxiv.org/abs/2504.11703}, 
}

@techreport{BloomfieldRushby2024Assurance,
  author      = {Bloomfield, Robin and Rushby, John},
  title       = {Assurance of {AI} Systems from a Dependability Perspective},
  institution = {SRI International},
  number      = {SRI-CSL-2024-02},
  year        = {2024},
  note        = {arXiv:2407.13948. Companion at ASSURE 2024 (ISSRE 2024)}
}

@inproceedings{Huang2025Resilience,
  title = {On the Resilience of {LLM}-Based Multi-Agent Collaboration with Faulty Agents},
  author = {Huang, Jen-Tse and Zhou, Jiaxu and Jin, Tailin and Zhou, Xuhui and Chen, Zixi and Wang, Wenxuan and Yuan, Youliang and Lyu, Michael and Sap, Maarten},
  booktitle = {Proc. 42nd Int. Conf. Machine Learning (ICML)},
  series    = {PMLR},
  volume    = {267},
  pages     = {26202--26226},
  year      = {2025},
  editor = 	 {Singh, Aarti and Fazel, Maryam and Hsu, Daniel and Lacoste-Julien, Simon and Berkenkamp, Felix and Maharaj, Tegan and Wagstaff, Kiri and Zhu, Jerry},
  month = 	 {13--19 Jul},
  publisher =    {PMLR},
}

@unpublished{karim2026forgebench,
  author = {Karim, Hassan and Gupta, Deepti and Nair, Akarsh K. and Boyd, Daonte and Elluri, Lavanya},
  title = {FORGE-Bench: A Threat-Labeled Dataset and Digital Twin Framework for Security Evaluation of {LLM}-Driven Warehouse Robots},
  year = {2026},
  note = {Manuscript under review},
  url = {https://orcid.org/0000-0002-5441-049X}
}


\end{document}